\documentclass{article}

\usepackage[accepted]{icml2026}

\usepackage[utf8]{inputenc} 
\usepackage[T1]{fontenc}    
\usepackage{hyperref}       
\usepackage{url}            
\usepackage{booktabs}       
\usepackage{nicefrac}       
\usepackage{microtype}      
\usepackage{xcolor}         

\usepackage{amsmath,amssymb,amsfonts,amsthm}
\usepackage{mathtools}
\usepackage{bm}
\usepackage{bbm}
\usepackage{algorithm}
\usepackage{algorithmic}
\usepackage{graphicx}
\usepackage{multirow}
\usepackage{enumitem}
\usepackage{tcolorbox}
\usepackage{tikz}
\usepackage{natbib}
\usepackage{cleveref}

\hypersetup{colorlinks=true,
  linkcolor=blue!70!black,citecolor=blue!50!black,urlcolor=blue!60!black}
  
\Crefname{figure}{Fig.}{Figs.}
\crefname{figure}{Fig.}{Figs.}

\newtheorem{theorem}{Theorem}

\newtheorem{assumption}{Assumption}
\theoremstyle{definition}

\newtheorem{remark}{Remark}

\newcommand{\cD}{\mathcal D}
\newcommand{\cX}{\mathcal X}
\newcommand{\cC}{\mathcal C}

\providecommand{\E}{\mathbb E}
\providecommand{\Prob}{\mathbb P}
\newcommand{\wh}[1]{\widehat{#1}}

\newcommand{\trainset}{\mathcal D_{\rm tr}}
\newcommand{\valset}{\mathcal D_{\rm val}}
\newcommand{\calset}{\mathcal D_{\rm cal}}
\newcommand{\candset}{\mathcal X_{\rm cand}}

\newcommand{\trueobj}{f}
\newcommand{\surr}{\widehat{f}_{\theta}}
\newcommand{\ww}{w}
\ifdefined\argmin\else\DeclareMathOperator*{\argmin}{arg\,min}\fi
\ifdefined\argmax\else\DeclareMathOperator*{\argmax}{arg\,max}\fi
\newcommand{\Quantile}{\operatorname{Quantile}}
\DeclareMathOperator{\KL}{D_{KL}}

\icmltitlerunning{Conformal Candidate Certification for Offline Model-Based Optimization}

\begin{document}

\twocolumn[
\icmltitle{Conformal Candidate Certification for Offline Model-Based Optimization}

\begin{icmlauthorlist}
\icmlauthor{Seungjin Choi}{croid}
\end{icmlauthorlist}

\icmlaffiliation{croid}{CROID Research and aSSIST University, Seoul, Korea}
\icmlcorrespondingauthor{Seungjin Choi}{seungjin.choi.mlg@gmail.com}

\vskip 0.3in
]

\printAffiliationsAndNotice{}

\begin{abstract}
Offline model-based optimization (MBO) proposes candidates by optimizing a
surrogate trained on a fixed historical dataset. Because candidates are
deliberately out-of-distribution, surrogate rankings are least reliable exactly
where the optimizer is most aggressive, yet existing methods provide no
per-candidate statistical certificate that a design meets a target threshold.
We propose \emph{Conformal Candidate Certification} (CCC), a post-hoc wrapper
that attaches a calibrated one-sided lower bound to each candidate and advances
only those whose bound exceeds the target. We show that entropy-regularized
surrogate maximization induces a Gibbs-tilted proposal, so the same surrogate
supplies importance weights for weighted conformal prediction without a separate
density-ratio estimation step. In a controlled synthetic study, CCC certifies
$16.7\%$ of an aggressive proposal pool with empirical coverage 0.990 at
nominal 0.90, while standard conformal prediction ignoring the covariate shift
collapses to 0.416 coverage.
\end{abstract}

\section{Introduction}

Protein engineering, molecular design, and materials optimization require
selecting designs $x\in\cX$ with high experimental response $\trueobj(x)$
under severe evaluation budgets.  Offline MBO addresses this in a single
shot: given only a static dataset $\cD=\{(x_i,y_i)\}_{i=1}^n$, propose a
small candidate set $\candset$ without further oracle
access~\citep{KumarA2020neurips,TrabuccoB2022icml}.  The usual pipeline
trains a surrogate $\surr$ on $\cD$ and optimizes or conditions on $\surr$
to obtain candidates.

Offline MBO methods intentionally search high-predicted-value regions that
lie away from the historical distribution, where surrogates extrapolate
unreliably.  The result is \emph{surrogate overestimation}: candidates look
excellent under $\surr$ yet perform poorly when measured.
Existing methods~\citep{TrabuccoB2021icml,YuS2021neurips,QiH2022neurips,
KrishnamoorthyS2023icml,MashkariaS2023icml,BrookesDH2019icml},
including conservative surrogate learning (COM, RoMA), invariant
representations (IOM), and conditional generative models (CbAS, DDOM,
BONET), address this by modifying the \emph{proposal} mechanism.
None of them answers the downstream question, ``Given a proposed candidate, does the
calibration data support the claim that $\trueobj(x^*)\ge\tau$ for a target
level $\tau$?'' Proposal (generate high-predicted-value candidates) and
certification (decide which are trustworthy enough to test) are distinct
tasks; conflating them means a conservatism penalty or diffusion-guidance
parameter never translates into a per-candidate coverage guarantee.

We propose \textbf{Conformal Candidate Certification} (CCC), a post-hoc
certification layer that wraps any offline MBO algorithm.
Given any candidate set $\candset$, CCC attaches a conformal lower bound
$\underline{y}(x^*)$ to each candidate and returns
\begin{equation}
  \wh\cC = \{x^*\in\candset: \underline y(x^*)\ge \tau\}.
\end{equation}
Formally, the finite-sample guarantee is for a future measured response
$Y^*$ at the candidate: $\Prob\{Y^*\ge\underline{y}(X^*)\}\ge1-\alpha$.
Under noiseless observations ($Y=\trueobj(X)$) this is equivalent to a
guarantee on the latent objective $\trueobj(x^*)$; under measurement
noise, additional assumptions on the noise model are needed to certify
$\trueobj(x^*)\ge\tau$ directly.
Figure~\ref{fig:overestimation} shows two OOD candidates both overestimated
by the surrogate; CCC certifies the nearer one (smaller uncertainty penalty)
and rejects the farther one (penalty drives the bound below $\tau$), while a
surrogate-threshold rule advances both.

\begin{figure*}[t]
  \centering
  \includegraphics[width=.8\linewidth]{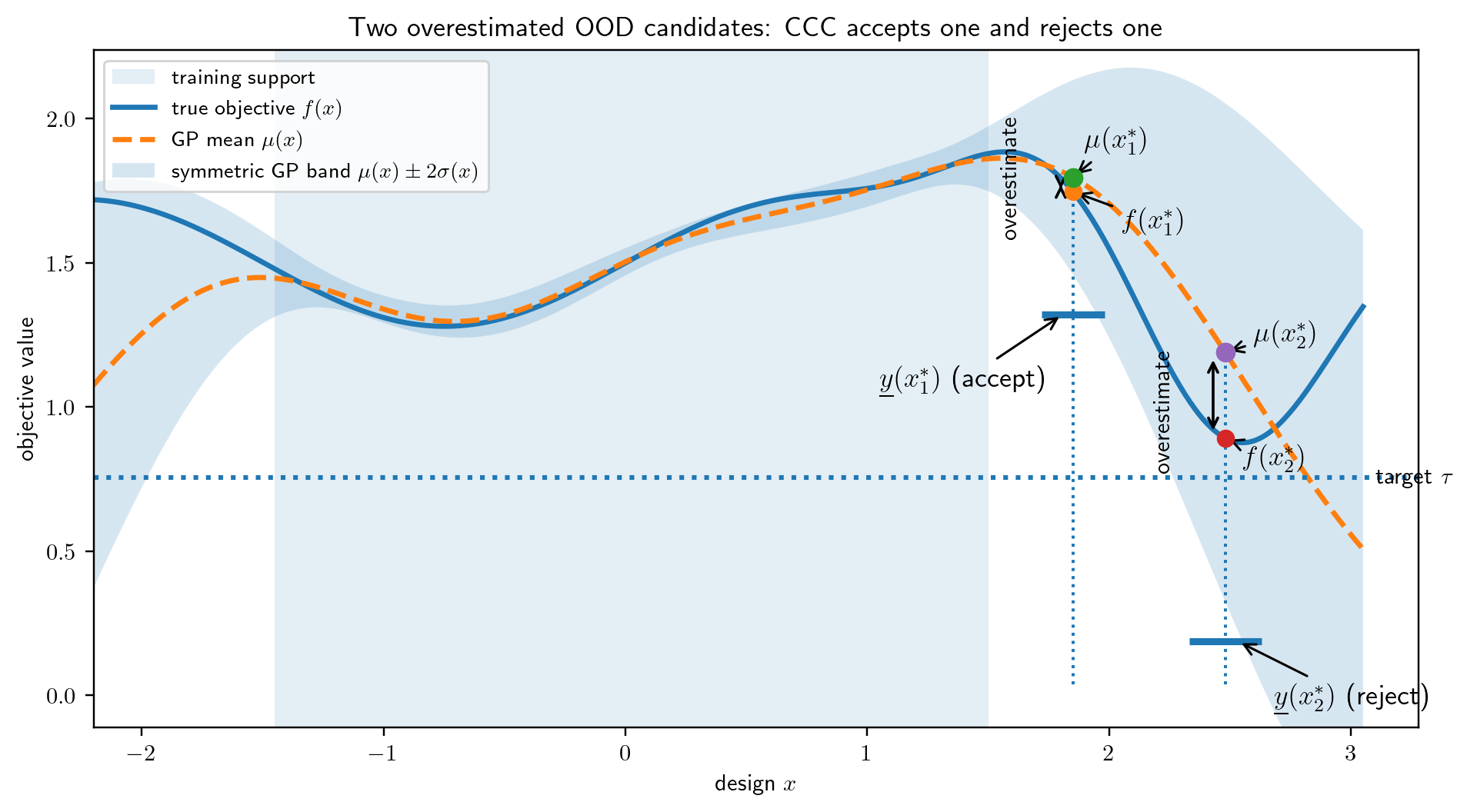}
  \caption{\textbf{Surrogate overestimation and selective certification by CCC.}
  Both candidates lie outside the training support and are overestimated:
  $\mu(x_j^*)>f(x_j^*)$.  A surrogate-threshold rule advances both because
  both GP means exceed $\tau$.  CCC computes $\underline{y}(x_j^*)$ via 
  \eqref{eq:lower-bound} and accepts only if
  $\underline{y}(x_j^*)\ge\tau$.  It accepts $x_1^*$ (smaller conformal
  margin) but rejects $x_2^*$ (larger margin pushes the bound below $\tau$).}
  \label{fig:overestimation}
\end{figure*}

\paragraph{Contributions.} 
\textbf{(i)}~We formulate post-hoc candidate certification for offline MBO
as a covariate-shift conformal prediction problem, separating certification
from proposal for the first time.
\textbf{(ii)}~We model the proposal distribution as a Gibbs tilt
$Q_T\propto P_X\exp\{\surr(x)/T\}$, derived from entropy-regularized
surrogate maximization; the same surrogate that creates the shift supplies
the importance weights, eliminating a separate density-ratio estimation step.
\textbf{(iii)}~We prove finite-sample marginal lower-bound validity under
oracle weights: $\Prob\{Y^*\ge\underline{y}(X^*)\}\ge 1-\alpha$ for a
fresh candidate from the proposal distribution.  The required
score-measurability condition is verified directly from the
strict data-splitting discipline of Assumption~\ref{ass:noleak}.
\textbf{(iv)}~A synthetic study shows CCC certifies $16.7\%$ of an
aggressive Boltzmann proposal pool with near-zero false certifications
and empirical coverage $0.990$ at nominal $0.90$, while Unweighted CP
collapses to $0.416$ coverage under the Boltzmann covariate shift.

A closely related framework is \emph{conformalized selection}
\citep{JinY2023jmlr,JinY2026biometrika}, which constructs conformal
$p$-values controlling the false discovery rate over a selected set
under (weighted) exchangeability.
CCC differs in two ways: it targets a \emph{per-candidate} lower
certificate $\underline{y}(x^*)$ rather than an FDR rule, and it
models the covariate shift as algorithm-induced via the Gibbs tilt
rather than requiring an external density-ratio estimate.
The two frameworks are complementary: FDR control over $\wh\cC$ can
be obtained by passing CCC's conformal $p$-values
$\wh p_{m+1}^w(x_j^*)$ into the weighted conformalized selection
procedure of \citet{JinY2026biometrika}.

\section{Problem Setup}
 
Let $P$ denote the historical data-generating distribution over $(X,Y)$,
with design marginal $P_X$ and conditional response $P(Y\,|\, X)$.  The
offline dataset $\cD$ is partitioned into three disjoint splits: $\trainset$
for surrogate fitting and proposal, $\valset$ for estimating plug-in weights,
and $\calset=\{(x_i,y_i)\}_{i=1}^m$ for final
conformal calibration.  An offline MBO algorithm, seeing only $\trainset$,
returns a candidate set $\candset=\{x_j^*\}_{j=1}^K$, which we model as
approximate draws from a proposal distribution $Q_X$ over designs.
 
\begin{assumption}[Covariate shift]
\label{ass:covshift}
The conditional response $P(Y\,|\, X=x)$ is the same for historical and
candidate designs.  Only the design marginal shifts from $P_X$ to $Q_X$.
\end{assumption}
 
\begin{assumption}[No calibration leakage]
\label{ass:noleak}
The proposal algorithm uses only $\trainset$.  All plug-in quantities
(the temperature $\wh T$ and importance weights) are fixed from $\trainset$,
$\valset$, the calibration covariates $\{X_i:i\in\calset\}$, and the
candidate covariates before any response in $\calset$ is accessed.
\end{assumption}

\begin{assumption}[Bounded oracle weights]
\label{ass:bounded}
The oracle density ratio $\ww(x)=dQ_X/dP_X(x)$ exists and satisfies
$\ww(x)\le w_{\max}<\infty$ for $P_X$-almost all $x$.
\end{assumption}
 
The certification target is a lower bound $\underline{y}(X^*)$ satisfying,
for a fresh candidate $X^*\sim Q_X$ and response $Y^*\sim P(\cdot\,|\, X^*)$,
\begin{equation}
  \Prob\{Y^*\ge \underline y(X^*)\}\ge 1-\alpha.
  \label{eq:target-validity}
\end{equation}
The guarantee \eqref{eq:target-validity} is for a single fresh candidate
$X^*\sim Q_X$; it does not imply that the accepted set $\wh\cC$ has a
bounded false-certification rate.  Finite-sample FDR control over $\wh\cC$
can be obtained by treating $\wh p^w_{m+1}(x_j^*)$ as a conformal $p$-value
for each candidate and applying the weighted conformalized selection
procedure of \citet{JinY2026biometrika}.

\section{Conformal Candidate Certification}

Conformal prediction provides distribution-free uncertainty quantification by
calibrating conformal scores on held-out data
\citep{VovkV2005book,AngelopoulosAN2023ftml}.
Standard split conformal prediction requires exchangeability between
calibration and test points, but offline MBO breaks this: calibration inputs
are drawn from the historical distribution $P_X$ while candidates are drawn
from the proposal distribution $Q_X$.  Importance-weighted conformal
prediction (IW-CP) restores marginal validity when the density ratio
$\ww = dQ_X/dP_X$ is known \citep{TibshiraniR2019neurips}.  The rest of
this section builds the complete CCC pipeline: derive $\ww$, estimate its
temperature, form the IW atoms, construct the one-sided score, and
invert the weighted quantile to obtain the lower bound.

\subsection{Gibbs-tilt proposal model and importance weights}

The proposal distribution should concentrate on high-surrogate designs while
staying close to $P_X$, where the surrogate is calibrated.  This yields an entropy-regularized optimization:
\begin{equation}
  Q_T = \argmax_{Q\ll P_X}
  \Bigl\{\E_{X\sim Q}\bigl[\surr(X)\bigr] - T\,\KL(Q\|P_X)\Bigr\},
\end{equation}
whose closed-form solution is the Gibbs tilt
\begin{eqnarray}
  q_T(x) & = &  \frac{p_X(x)\exp\{\surr(x)/T\}}{Z_T},  \nonumber \\  
  Z_T & = &  \E_{P_X}\exp\{\surr(X)/T\},
  \label{eq:gibbs}
\end{eqnarray}
giving density ratio $w(x) = \exp\{\surr(x)/T\}/Z_T$.
The identity \eqref{eq:gibbs} is exact for Boltzmann or softmax proposals
and serves as an interpretable working model for gradient-ascent or
generative MBO.  Its key property is that $Z_T$ cancels in the normalized
IW-CP atoms (below), so the surrogate alone determines the weights.

\paragraph{Temperature estimation.}
We estimate $T$ by moment-matching: find $\wh T$ such that the tilted
mean surrogate score over a reference set equals the mean candidate score
$\bar f_{\rm cand}=K^{-1}\sum_{j=1}^K\surr(x_j^*)$.
Using $\valset$ alone as the reference can severely underestimate $T$
when candidates are OOD relative to $\valset$: the tilt collapses onto
the few high-$\surr$ validation points, producing degenerate weights and
near-vacuous bounds.  We therefore pool $\valset$ with the candidate
surrogate scores to anchor the upper tail of the reference:
\begin{equation}
  \wh T = \argmin_{T\in[T_{\min},T_{\max}]}
  \left(
    \frac{\sum_{r\in\mathcal{R}}\exp\{\surr(r)/T\}\surr(r)}
         {\sum_{r\in\mathcal{R}}\exp\{\surr(r)/T\}}
    - \bar f_{\rm cand}
  \right)^{\!2},
  \label{eq:temp-est}
\end{equation}
where $\mathcal{R}=\valset\cup\{\surr(x_j^*)\}_{j=1}^K$ pools the
validation set with all candidate surrogate scores, and
$[T_{\min},T_{\max}]$ (e.g.\ $[0.01,100]$ on the surrogate scale) is
searched by bisection.  Candidate scores are always available at
estimation time and do not include calibration responses, so
Assumption~\ref{ass:noleak} is satisfied.
When the proposal temperature is known (e.g.\ an explicit Boltzmann
proposal), $\wh T$ may be set to the oracle value directly.

\paragraph{IW conformal atoms.}
 For a candidate $x^*$, IW-CP places atoms over the $m$ calibration
points and the (unknown) test point:
\begin{eqnarray}
  p_i^w(x^*) & = & \frac{\ww(x_i)}{\sum_{\ell=1}^m\ww(x_\ell)+\ww(x^*)},
  \nonumber \\
  p_{m+1}^w(x^*) & = & \frac{\ww(x^*)}{\sum_{\ell=1}^m\ww(x_\ell)+\ww(x^*)}.
\end{eqnarray}
Substituting the Gibbs ratio, $Z_T$ cancels and the practical atoms are
\begin{equation}
  \wh p_i^w(x^*) =
  \frac{\exp\{\surr(x_i)/\wh T\}}
       {\sum_{\ell=1}^m\exp\{\surr(x_\ell)/\wh T\}+\exp\{\surr(x^*)/\wh T\}},
  \label{eq:gibbs-atoms}
\end{equation}
with an analogous expression for $\wh p_{m+1}^w(x^*)$.  Each candidate
induces its own augmented weighted empirical distribution.  For generative
MBO methods whose proposal is poorly described by \eqref{eq:gibbs}, a
discriminative fallback fits a classifier to distinguish $\valset$ from
$\candset$ and uses the odds-ratio identity
\citep{SugiyamaM2012book}.

\subsection{One-sided score}

We use a surrogate trained on $\trainset$, with predicted mean $\mu(x)$
and uncertainty scale $\sigma(x)>0$.  Since certification is a one-sided
problem (we advance $x^*$ only if $\trueobj(x^*)$ is certifiably above
$\tau$), a symmetric score wastes calibration probability on the upper
tail.  The nonconformity score is the signed one-sided residual
\begin{equation}
  s(x,y) = \frac{\mu(x)-y}{\sigma(x)},
  \label{eq:one-sided}
\end{equation}
which is large when the surrogate overestimates.  Inverting $s(x,y) \le q$
gives directly $y\ge\mu(x)-q\sigma(x)$.

\begin{remark}[Surrogate and uncertainty choice]
For GP surrogates, $\mu(x)$ and $\sigma(x)$ are the predictive mean and
standard deviation.  For non-GP surrogates (e.g.\ gradient boosting),
set $\sigma(x)=1$, reducing the score to the signed one-sided residual
$\mu(x)-y$.  The conformal quantile absorbs the scale, so the lower
bound remains valid.
\end{remark}

A natural extension is to replace $\mu(x)$ with a proposal-weighted
local surrogate mean that adjusts the score center to reflect the
proposal's concentration near $x^*$; we set this correction to zero
throughout (the construction and conditions under which it helps are
discussed in the supplementary material of an extended version).

\subsection{Weighted conformal quantile and lower bound}

For a fixed candidate $x^*$, compute calibration scores
\[
  s_i(x^*) = s(x_i, y_i) = \frac{\mu(x_i)-y_i}{\sigma(x_i)},
  \qquad i\in\calset,
\]
and the IW quantile
\begin{eqnarray}
\wh q_{1-\alpha}^w(x^*)  & = &   \Quantile_{1-\alpha}\! \left(
    \sum_{i=1}^m \wh p_i^w(x^*)\,\delta_{s_i(x^*)} \right. \nonumber \\
   &  & +  \, \wh p_{m+1}^w(x^*)\,\delta_{+\infty} \Big).
  \label{eq:weighted-quantile}
\end{eqnarray}
The $+\infty$ atom accounts for the unknown test score.
Whenever $\wh p_{m+1}^w(x^*)>\alpha$, the finite-mass part of the
distribution sums to less than $1-\alpha$, so
$\wh q_{1-\alpha}^w(x^*)=+\infty$ and the lower bound is vacuous:
the correct behavior when the calibration data cannot support
certification of $x^*$.

Since $s(x^*,y)$ is strictly decreasing in $y$, inverting
$s(x^*,y^*)\le\wh q_{1-\alpha}^w(x^*)$ gives the CCC lower bound:
\begin{equation}
  \underline{y}(x^*) =
  \mu(x^*) - \wh q_{1-\alpha}^w(x^*)\,\sigma(x^*).
  \label{eq:lower-bound}
\end{equation}
The two terms are transparent: surrogate center and conformal uncertainty
penalty.  A candidate passes when $\underline{y}(x^*)\ge\tau$.

\begin{algorithm}[t]
\caption{Conformal Candidate Certification (CCC)}
\begin{algorithmic}[1]
\REQUIRE Dataset $\cD$, candidate set $\candset$, level $\alpha$,
         threshold $\tau$.
\STATE Split $\cD$ into $\trainset$, $\valset$, $\calset$.
\STATE Train surrogate on $\trainset$; obtain $\mu$, $\sigma$, and $\surr$.
\STATE Estimate $\wh T$ on $\valset$ by \eqref{eq:temp-est}.
\FOR{each $x^*\in\candset$}
  \STATE Compute Gibbs atoms $\wh p_i^w(x^*)$ and
         $\wh p_{m+1}^w(x^*)$ by \eqref{eq:gibbs-atoms}.
  \STATE Compute calibration scores
         $s_i(x^*)=s(x_i,y_i)$, $i\in\calset$.
  \STATE Compute $\wh q_{1-\alpha}^w(x^*)$ by \eqref{eq:weighted-quantile}.
  \STATE Compute $\underline{y}(x^*)$ by \eqref{eq:lower-bound}.
\ENDFOR
\STATE \textbf{Return} $\wh\cC=\{x^*\in\candset:\underline{y}(x^*)\ge\tau\}$.
\end{algorithmic}
\end{algorithm}

\section{Validity and Diagnostic Properties}

Theorem~\ref{thm:validity} applies the importance-weighted conformal
prediction framework of \citet{TibshiraniR2019neurips} to the offline
MBO setting.  The contribution of CCC lies in how this framework is
instantiated: the algorithm-induced covariate shift is governed by the
Gibbs-tilt proposal $Q_T$, which supplies analytically tractable
importance weights from the surrogate itself, and the one-sided score
$s(\cdot,\cdot)$ \eqref{eq:one-sided} can be inverted into a lower
certificate on the objective value.  The theorem makes precise the
validity guarantee that CCC inherits when these weights are oracle and
the score is fixed before the calibration responses are accessed.

\begin{theorem}[Oracle-weight marginal lower-bound validity]
\label{thm:validity}
Suppose Assumptions~\ref{ass:covshift}, \ref{ass:noleak}, and
\ref{ass:bounded} hold, and suppose the weighted conformal quantile
uses the true density ratio $\ww=dQ_X/dP_X$.  Let
$\underline{y}^{\rm oracle}(x^*)$ denote the lower bound
\eqref{eq:lower-bound} computed with these oracle weights.  Then for a
fresh candidate $X^*\sim Q_X$ and response $Y^*\sim P(\cdot\,|\, X^*)$,
\begin{equation}
  \Prob\{Y^*\ge \underline{y}^{\rm oracle}(X^*)\}\ge 1-\alpha.
\end{equation}
Under noiseless observations $Y=\trueobj(X)$, the same bound holds with
$\trueobj(X^*)$ in place of $Y^*$.
\end{theorem}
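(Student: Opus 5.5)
The plan is to reduce the statement to the weighted-exchangeability theorem of \citet{TibshiraniR2019neurips} and then invert the one-sided score. First, fix the score function. By Assumption~\ref{ass:noleak}, $\mu$ and $\sigma$ are fit on $\trainset$ alone. So, conditionally on $\trainset$ (and on $\valset$), the map $s(x,y)=(\mu(x)-y)/\sigma(x)$ is a fixed measurable function that does not depend on the calibration responses or on the test point. All subsequent probability statements are made conditionally on $\trainset\cup\valset$; the marginal statement then follows by the tower property.

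Second, I will establish weighted exchangeability. Conditionally on $\trainset\cup\valset$, the pairs $(X_i,Y_i)$, $i=1,\dots,m$, are i.i.d.\ from $P=P_X\times P(Y\mid X)$. The test pair $(X^*,Y^*)$ is independent of them and drawn from $Q_X\times P(Y\mid X)$. By Assumption~\ref{ass:covshift} the conditional laws agree, so the joint density of $Z_1,\dots,Z_{m+1}$ with respect to $P^{m+1}$ is $\ww(x_{m+1})$. This is of the form $\prod_{j}\tilde w_j(z_j)$ with $\tilde w_{m+1}=\ww$ and $\tilde w_j\equiv1$ for $j\le m$. Hence the vector is weighted exchangeable in the sense of Tibshirani et al., with weight function $\ww$; Assumption~\ref{ass:bounded} guarantees that $\ww$ is a finite density ratio, so these densities are well defined.

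Third, I will apply the weighted-quantile lemma of \citet{TibshiraniR2019neurips}. Let $E_z$ denote the event that the multiset $\{Z_1,\dots,Z_{m+1}\}$ equals $z$. Conditioning on $E_z$, the law of $V_{m+1}=s(X^*,Y^*)$ is $\sum_{i=1}^{m+1}p_i^w(x^*)\,\delta_{V_i}$, with the oracle atoms $p_i^w$ evaluated at the test covariate. This holds because the probability that the test point occupies slot $i$ is proportional to $\ww(x_i)$. Therefore $\Prob\{V_{m+1}\le \Quantile_{1-\alpha}(\sum_{i=1}^{m+1}p_i^w\delta_{V_i})\mid E_z\}\ge1-\alpha$. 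Replacing $\delta_{V_{m+1}}$ by $\delta_{+\infty}$ only enlarges the quantile, which yields the oracle $\wh q^w_{1-\alpha}(X^*)$ of \eqref{eq:weighted-quantile}. Marginalizing over $E_z$ then gives $\Prob\{s(X^*,Y^*)\le \wh q^w_{1-\alpha}(X^*)\}\ge1-\alpha$. I expect this step to be the main obstacle. One must check carefully that the weighted quantile is a symmetric function of the multiset once the test point's weight is included, so that conditioning on $E_z$ is legitimate. One must also handle ties, which the quantile-of-a-discrete-measure convention takes care of. The vacuous case $\wh q=+\infty$ poses no difficulty, since the event then holds trivially.

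Finally, I will invert the score. Since $\sigma(X^*)>0$, the event $s(X^*,Y^*)\le \wh q$ is equivalent to $Y^*\ge\mu(X^*)-\wh q\,\sigma(X^*)=\underline y^{\rm oracle}(X^*)$, using the convention $\underline y=-\infty$ when $\wh q=+\infty$. This proves the claim. In the noiseless case $Y^*=\trueobj(X^*)$ almost surely, so the same inequality holds with $\trueobj(X^*)$ in place of $Y^*$.
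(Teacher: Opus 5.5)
Your proposal is correct and follows the same route as the paper's proof. Both fix the score via the no-leakage assumption, invoke the covariate-shift conformal result of \citet{TibshiraniR2019neurips} to get $\Prob\{s(X^*,Y^*)\le q^w_{1-\alpha}(X^*)\}\ge 1-\alpha$, and invert the one-sided score using $\sigma(X^*)>0$. The only difference is presentational: you unpack the weighted-exchangeability and multiset-conditioning argument, and you condition explicitly on $\trainset\cup\valset$, which also fixes $Q_X$ and hence $\ww$; the paper cites the theorem as a black box.
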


Theorem~\ref{thm:validity} should be read carefully.  It guarantees
validity for the \emph{oracle} lower bound $\underline{y}^{\rm oracle}(x^*)$
computed with the true density ratio.  The practical CCC algorithm uses
plug-in Gibbs weights, so the implemented $\underline{y}(x^*)$ does not
inherit the exact finite-sample guarantee.  The theorem is marginal over
a fresh candidate $X^*\sim Q_X$: it does not hold simultaneously for all
$K$ proposed candidates, does not imply post-selection validity
conditional on $x^*\in\wh\cC$, and requires $Q_X\ll P_X$.

\begin{remark}[Plug-in weights and empirical coverage]
When Gibbs weights are estimated from data, the finite-sample guarantee
no longer holds exactly.  Empirical coverage above the nominal level
in repeated experiments is consistent with the oracle target, but should
be interpreted as diagnostic evidence rather than a finite-sample guarantee.
\end{remark}

\begin{remark}[Noisy measurements vs.\ latent objective]
Theorem~\ref{thm:validity} certifies the \emph{future measured response}
$Y^*$, not the latent objective $\trueobj(X^*)$.  If the scientific
target is $\trueobj(x^*)\ge\tau$, additional assumptions (e.g.\ known
noise variance) are needed.  In the noiseless case $Y=\trueobj(X)$ the
two coincide.
\end{remark}

\section{Experiments: Synthetic Validation}

We present a controlled synthetic stress test designed to address the main
failure mode in offline MBO: an aggressive surrogate proposal can rank
overestimated OOD candidates above the experimental target, and a useful
certification rule should select a nontrivial subset rather than return
the empty set.

\subsection{Setup}

The training support is \([-1.5,1.5]\), while candidates are proposed in the
OOD region \([1.55,3.0]\).  The true objective is
\begin{eqnarray}
  f(x)
  & = &
  1.15
  +0.65\exp\!\left\{-\frac{(x-1.85)^2}{2(0.28)^2}\right\} \nonumber \\ 
  & - & 0.80\exp\!\left\{-\frac{(x-2.58)^2}{2(0.32)^2}\right\}
  +0.05\sin(3x). \nonumber \\
\end{eqnarray}
Thus the OOD region contains a genuinely high-value neighborhood near
\(x\approx1.85\) and an overestimated low-value neighborhood near
\(x\approx2.58\).  We define an intentionally optimistic surrogate center
\begin{eqnarray}
  \mu(x)
  & = & 
  f(x)
  +0.08\sin(4x)
  +0.14(x-1.5)_+ \nonumber \\
  & + & 1.25\exp\!\left\{-\frac{(x-2.58)^2}{2(0.38)^2}\right\},
  \label{eq:selective-surrogate}
\end{eqnarray}
with GP-like uncertainty scale
\begin{equation}
  \sigma(x)
  =
  0.08
  +0.22(x-1.5)_+
  +0.12\exp\!\left\{-\frac{(x-2.60)^2}{2(0.35)^2}\right\}.
  \label{eq:selective-sigma}
\end{equation}
The target threshold is \(\tau=1.45\).  The surrogate center exceeds \(\tau\)
throughout much of the OOD region, including points whose true objective falls
below \(\tau\).  This creates a nontrivial screening problem: a naive surrogate
threshold rule over-accepts, while a useful CCC rule should certify only the
subset whose lower certificates clear the target.

Candidates are sampled from a Gibbs proposal proportional to
\(\exp\{\mu(x)/T_{\rm prop}\}\) with \(T_{\rm prop}=0.35\) over
\([1.55,3.0]\), which produces a mixture of high-value and overestimated
OOD candidates.  Since the proposal temperature is known by construction,
we set \(\wh T=T_{\rm prop}=0.35\) directly; moment-matching on
\(\valset\) underestimates \(T\) and renders most bounds vacuous in this
setup.  Auxiliary data are split into \(\valset\) and \(\calset\); the
calibration split is reserved for the final weighted conformal quantile.
We use \(\alpha=0.10\) for the main comparison, \(K=300\) candidates per
seed, and average over \(40\) independent seeds.

\begin{table*}[t]
\centering
\caption{Selective-certification experiment (\(\tau=1.45\), \(\alpha=0.10\),
\(K=300\), 40 seeds).  CCC certifies a nontrivial subset of the aggressive
proposal pool while maintaining empirical lower-bound coverage.}
\label{tab:selective-main}
\small
\setlength{\tabcolsep}{2.5pt}
\begin{tabular}{lccccc}
\toprule
Method & Pass & Cov. & Prec. & False & Mean \(f\) \\
\midrule
Naive Surrogate & \(1.000{\pm}0.000\) & -- & \(0.410{\pm}0.029\) & \(0.590{\pm}0.029\) & \(1.150{\pm}0.023\) \\
Unweighted CP   & \(0.500{\pm}0.108\) & \(0.416{\pm}0.300\) & \(0.773{\pm}0.126\) & \(0.227{\pm}0.126\) & \(1.555{\pm}0.035\) \\
CCC             & \(0.167{\pm}0.099\) & \(0.990{\pm}0.055\) & \(0.996{\pm}0.025\) & \(0.004{\pm}0.025\) & \(1.688{\pm}0.019\) \\
\bottomrule
\end{tabular}
\vspace{-1mm}
\end{table*}

We compare three rules: (1) \emph{\bf Naive Surrogate} accepts candidates
with \(\mu(x^*)\ge\tau\); (2) \emph{\bf Unweighted CP} applies standard
split conformal with equal weights, ignoring the Boltzmann covariate
shift; (3) \emph{\bf CCC} uses the IW-weighted conformal quantile with
the one-sided score \eqref{eq:one-sided}.  We report pass rate, empirical
lower-bound coverage, precision, false-acceptance rate, and mean true
objective among accepted candidates.

\subsection{Selective Certification Results}

Table~\ref{tab:selective-main} shows CCC performs selective certification
rather than trivial abstention.  The naive surrogate rule accepts all
candidates yet only $41\%$ are truly above \(\tau\).  The key finding is
that Unweighted CP, which ignores the Boltzmann covariate shift,
dramatically under-covers: empirical coverage is $0.416$, far below the
nominal $0.90$, with a $22.7\%$ false-acceptance rate.  This confirms
that the Gibbs-weight correction is essential when candidates are drawn
from a shifted distribution.  CCC certifies $16.7\%$ of the pool with
empirical coverage $0.990$ and false-acceptance rate below $0.5\%$.

To distinguish CCC from a merely conservative abstention rule, we
evaluate coverage across multiple nominal levels.
Table~\ref{tab:selective-calibration} reports empirical coverage for
\(\alpha\in\{0.05,0.10,0.15,0.20\}\).  CCC maintains empirical coverage
at or above the nominal level while certifying a nontrivial fraction of
candidates at every \(\alpha\).

\begin{table}[t]
\centering
\caption{Calibration of CCC across nominal levels (40 seeds,
same run as Table~\ref{tab:selective-main}).}
\label{tab:selective-calibration}
\small
\setlength{\tabcolsep}{3pt}
\begin{tabular}{ccccc}
\toprule
\(\alpha\) & Nominal & Emp. cov. & Pass & Prec. \\
\midrule
0.05 & 0.95 & \(1.000{\pm}0.000\) & \(0.003{\pm}0.008\) & \(1.000{\pm}0.000\) \\
0.10 & 0.90 & \(0.990{\pm}0.055\) & \(0.167{\pm}0.099\) & \(0.996{\pm}0.025\) \\
0.15 & 0.85 & \(0.964{\pm}0.152\) & \(0.237{\pm}0.098\) & \(0.986{\pm}0.063\) \\
0.20 & 0.80 & \(0.938{\pm}0.174\) & \(0.285{\pm}0.095\) & \(0.975{\pm}0.072\) \\
\bottomrule
\end{tabular}
\vspace{-1mm}
\end{table}

Figure~\ref{fig:selective-calibration} visualizes the same calibration
diagnostic for CCC.  CCC is conservative but not vacuous: increasing \(\alpha\) raises
the pass rate while empirical lower-bound coverage remains above the nominal
line.

\begin{figure}[t]
\centering
\includegraphics[width=.9\linewidth]{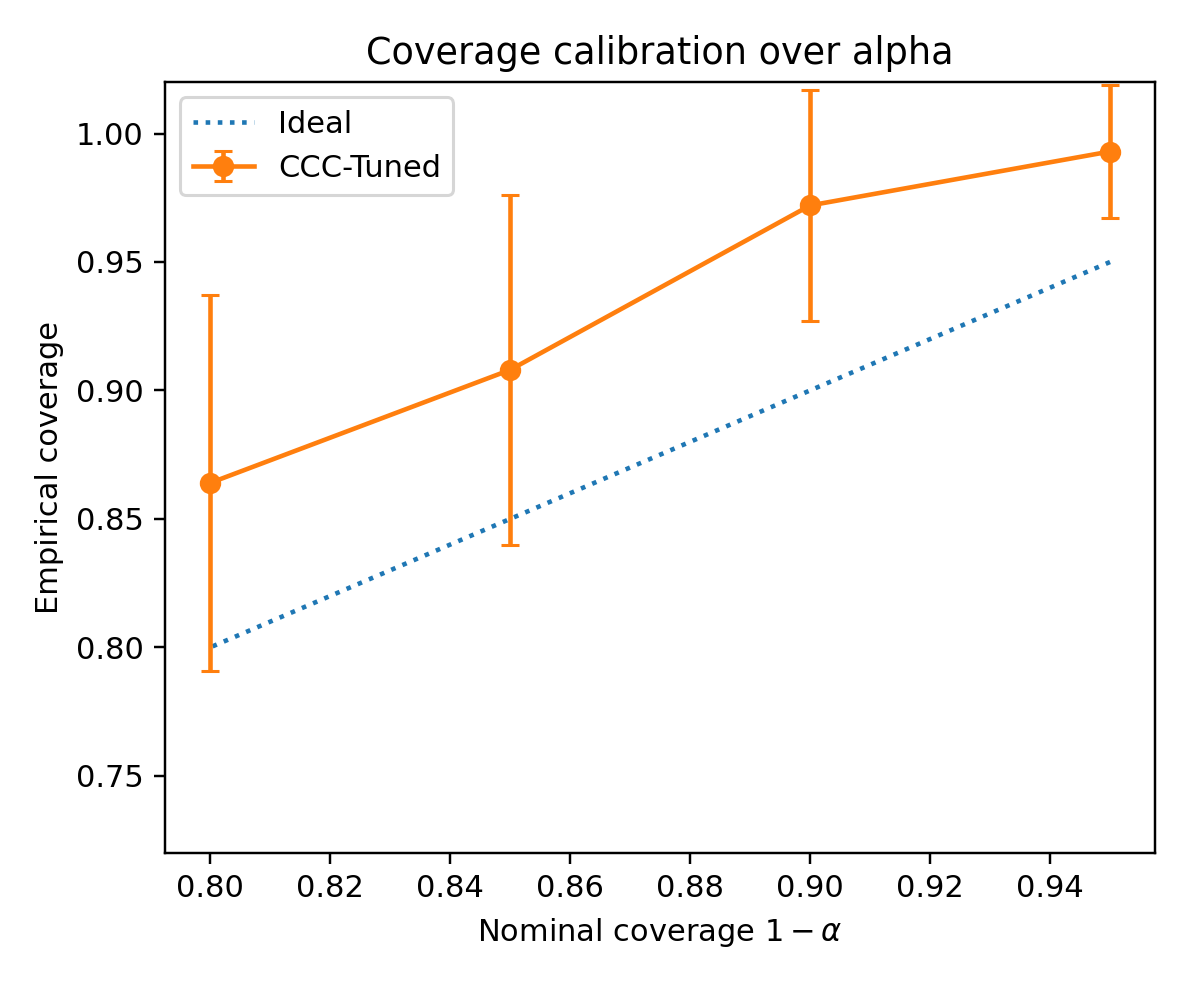}
\caption{Coverage calibration of CCC across
\(\alpha\in\{0.05,0.10,0.15,0.20\}\).  The dotted line is ideal
calibration.  CCC maintains empirical coverage at or above the nominal
level while certifying a nontrivial subset at each \(\alpha\).}
\label{fig:selective-calibration}
\end{figure}

\section{Discussion and Conclusion}
 
CCC separates offline MBO into two distinct tasks: proposal and
certification.  Any proposal algorithm can remain aggressive;
CCC then determines which candidates have enough empirical support
to justify costly evaluation, and the accepted set $\wh\cC$ seeds
the next experimental round.
 
The main limitation is weight estimation.  The finite-sample guarantee
assumes oracle weights; the practical Gibbs tilt is exact only for
entropy-regularized proposals and approximate otherwise.  The failure
mode is conservative rather than anti-conservative: underestimating
$T$ widens the conformal penalty and certifies fewer candidates while
maintaining coverage.  High-risk discovery may still require a separate
exploration budget for uncertified candidates.
 
Future work includes coverage-degradation bounds under plug-in weights,
set-level FDR control by combining CCC's per-candidate $p$-values with
\citet{JinY2026biometrika}, and online updates after certified
experimental rounds.  CCC provides a principled post-hoc answer to the
question offline MBO leaves unanswered: which proposed candidates are
trustworthy enough to test?  Under oracle weights the answer carries
marginal conformal validity; with estimated weights it gives an
empirically testable certification layer for offline-to-online
decision-making.

\bibliographystyle{icml2026}
\bibliography{/users/seungjin/pub/bib/sjc}

\clearpage
\onecolumn
\appendix

\section{Proof of Theorem~\ref{thm:validity}}
\begin{proof}
Let
\[
  \calset = \{(X_i,Y_i)\}_{i=1}^m
\]
be the final calibration split, with
\[
  (X_i,Y_i)\sim P_X(dx)P(dy\,|\,x), \qquad i=1,\ldots,m,
\]
and let
\[
  (X^*,Y^*)\sim Q_X(dx)P(dy\,|\,x)
\]
be an independent fresh candidate-response pair.  By
Assumption~\ref{ass:noleak} and the construction of the CCC pipeline,
the candidate-specific score map
\[
  (x,y)\mapsto s(x,y)
\]
is fixed before the final calibration responses are used: $\mu$,
$\sigma$, and $\wh{T}$ are all determined from $\trainset$, $\valset$,
candidate covariates, and calibration covariates only, with no dependence
on any $Y_i\in\calset$.

Under Assumption~\ref{ass:covshift}, the calibration and candidate
distributions differ only in their design marginals: the conditional
response distribution $P(Y\,|\,X)$ is the same.  Under
Assumption~\ref{ass:bounded}, the oracle density ratio
\[
  w(x) = \frac{dQ_X}{dP_X}(x)
\]
exists and is finite.  Applying the covariate-shift conformal theorem of
\citet{TibshiraniR2019neurips} to this fixed candidate-specific score
gives the following marginal statement.  With oracle atoms
\[
  p_i^w(X^*)
  =
  \frac{w(X_i)}{\sum_{\ell=1}^m w(X_\ell)+w(X^*)},
  \qquad
  p_{m+1}^w(X^*)
  =
  \frac{w(X^*)}{\sum_{\ell=1}^m w(X_\ell)+w(X^*)},
\]
and weighted quantile $q_{1-\alpha}^w(X^*)$ formed from
\[
  \sum_{i=1}^m p_i^w(X^*)\,\delta_{s(X_i,Y_i)}
  + p_{m+1}^w(X^*)\,\delta_{+\infty},
\]
we have
\[
  \Prob\!\left\{
    s(X^*,Y^*)
    \le q_{1-\alpha}^w(X^*)
  \right\}
  \ge 1-\alpha.
\]
This probability is marginal over the calibration sample and the fresh
candidate-response pair.  It is not a conditional coverage statement for
every realized set of covariates.

It remains only to invert the one-sided score.  Since $\sigma(X^*)>0$,
\[
  s(X^*,Y^*) \le q_{1-\alpha}^w(X^*)
\]
is equivalent to
\[
  Y^* \ge
  \mu(X^*)
  - q_{1-\alpha}^w(X^*)\,\sigma(X^*)
  = \underline{y}^{\rm oracle}(X^*).
\]
Therefore
\[
  \Prob\{Y^*\ge\underline{y}^{\rm oracle}(X^*)\}\ge 1-\alpha.
\]
If the experiment is noiseless, $Y=\trueobj(X)$ almost surely, the same
argument gives
\[
  \Prob\{\trueobj(X^*)\ge\underline{y}^{\rm oracle}(X^*)\}\ge 1-\alpha.
\]
\end{proof}

\end{document}